\newtheorem{mypro}{Proposition}
\newcommand{\etal}{\textit{et al}. }
\newcommand{\ie}{\textit{i}.\textit{e}. }
\newcommand{\eg}{\textit{e}.\textit{g}. }
\newcommand{\pxj}[1]{{\color{black}{#1}}}
\begin{document}
\pagestyle{headings}
\mainmatter

\title{Suppressing Mislabeled Data \\via Grouping and Self-Attention} 



\titlerunning{Suppressing Mislabeled Data \\via Grouping and Self-Attention}
%
\author{Xiaojiang Peng$^\star$ \inst{1,2} \and Kai Wang\thanks{Equally-contributed first authors.$^\dagger$Corresponding author (yu.qiao@siat.ac.cn)}\inst{1,2} \and
Zhaoyang Zeng$^\star$\inst{3} \and
Qing Li \inst{4} \and
Jianfei Yang \inst{5} \and
Yu Qiao$^\dagger$\inst{1,2}}
\authorrunning{X. Peng, K. Wang, Z. Zeng, Q. Li, J. Yang and Y. Qiao}
%
\institute{Guangdong-Hong Kong-Macao Joint Laboratory of Human-Machine Intelligence-Synergy Systems, Shenzhen Institutes of Advanced Technology, Chinese Academy of Sciences, 518055, China \and
SIAT Branch, Shenzhen Institute of Artificial Intelligence and Robotics for Society \and
Sun Yat-sen University \and Southwest Jiaotong University \and
Nanyang Technological University, Singapore}
\maketitle

\begin{abstract}
Deep networks achieve excellent results on large-scale clean data but degrade significantly when learning from noisy labels. To suppressing the impact of mislabeled data, this paper proposes a conceptually simple yet efficient training block, termed as Attentive Feature Mixup (AFM), which allows paying more attention to clean samples and less to mislabeled ones via sample interactions in small groups. Specifically, this plug-and-play AFM first leverages a \textit{group-to-attend} module to construct groups and assign attention weights for group-wise samples, and then uses a \textit{mixup} module with the attention weights to interpolate massive noisy-suppressed samples. The AFM has several appealing benefits for noise-robust deep learning. (i) It does not rely on any assumptions and extra clean subset. (ii) With massive interpolations, the ratio of useless samples is reduced dramatically compared to the original noisy ratio. (iii) \pxj{It jointly optimizes the interpolation weights with classifiers, suppressing the influence of mislabeled data via low attention weights. (iv) It partially inherits the vicinal risk minimization of mixup to alleviate over-fitting while improves it by sampling fewer feature-target vectors around mislabeled data from the mixup vicinal distribution.} Extensive experiments demonstrate that AFM yields state-of-the-art results on two challenging real-world noisy datasets: Food101N and Clothing1M. The code will be available at \href{https://github.com/kaiwang960112/AFM}{https://github.com/kaiwang960112/AFM}.
\keywords{Noisy-labeled data, mixup, noisy-robust learning}
\end{abstract}


\section{Introduction}
\label{intro}
In recent years, deep neural networks (DNNs) have achieved great success in various tasks, particularly in supervised learning tasks on large-scale image recognition challenges, such as ImageNet \cite{deng2009imagenet} and COCO \cite{lin2014microsoft}. One key factor that drives impressive results is the large amount of well-labeled images. However, high-quality annotations are laborious and expensive, even not always available in some domains. To address this issue, an alternative solution is to crawl a large number of web images with tags or keywords as annotations~\cite{gong2014multi,li2017webvision}. These annotations provide weak supervision, which are noisy but easy to obtain.

\begin{figure}[tp]
\center
	\includegraphics[width=0.7\textwidth]{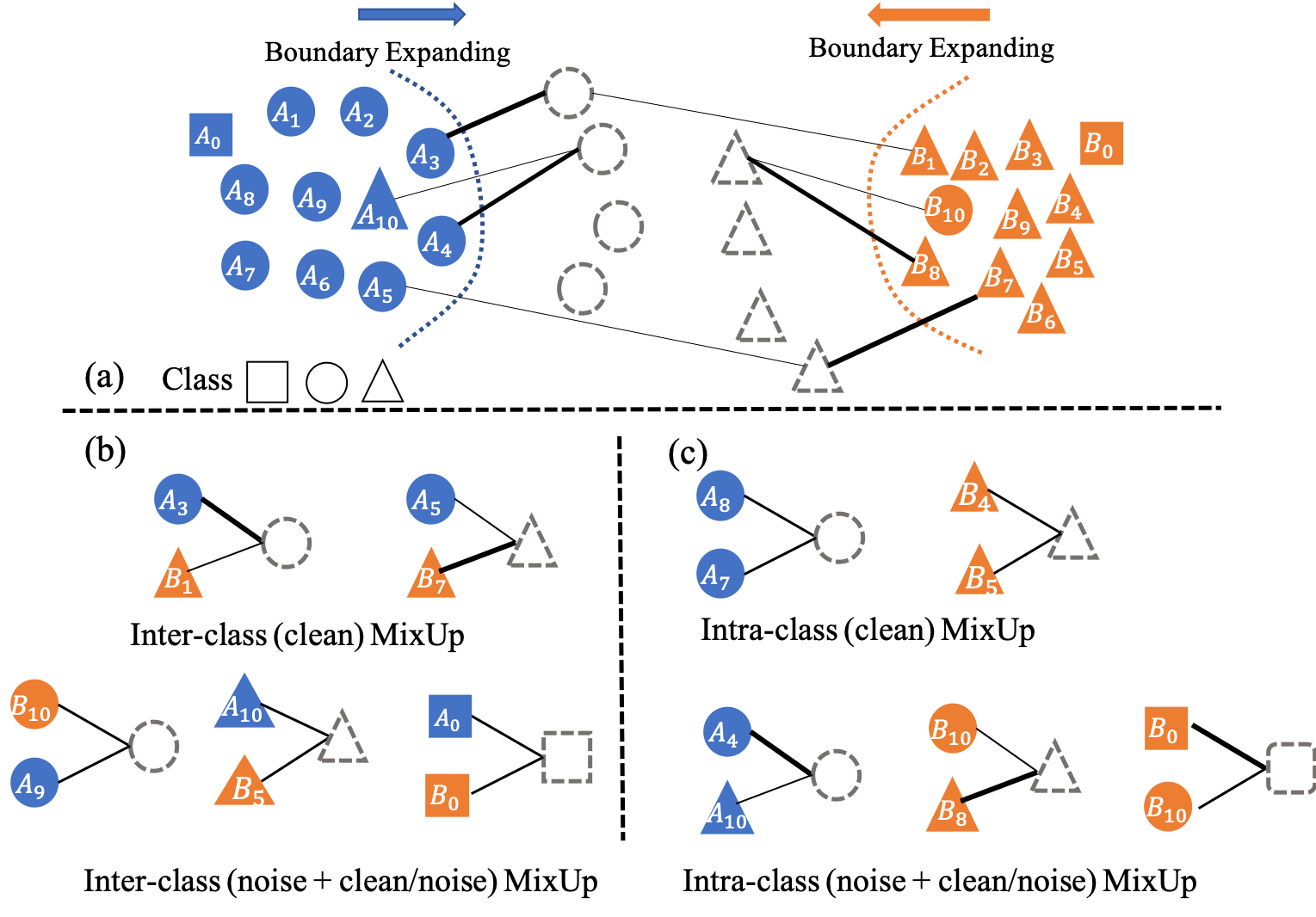}
	\caption{\pxj{Suppressing mislabeled samples by grouping and self-attention mixup. \textit{Different colors and shapes denote given labels and ground truths. Thick and thin lines denote high and low attention weights, respectively.} $A_0, A_{10}, B_0$, and $B_{10}$ are supposed to be mislabeled samples, and can be suppressed by assigning low interpolation weights in mixup operation.}}
	\label{fig:simple_motivation}
\end{figure}
 
In general, noisy labeled examples hurt generalization because DNNs easily overfit to noisy labels~\cite{frenay2014classification,sukhbaatar2014training,arpit2017closer}. To address this problem, it is intuitive to develop noise-cleaning methods which aim to correct the mislabeled samples either by joint optimization of classification and relabeling~\cite{tanaka2018joint} or by iterative self-learning~\cite{han2019deep}. However, the noise-cleaning methods often suffer
from the main difficulty in distinguishing mislabeled samples from hard samples.
Another solution is to develop noise-robust methods which aims to reduce the contributions of mislabeled samples for model optimization. 
Along this solution, some methods estimate a matrix for label noise modeling and use it to adapt output probabilities and loss values~\cite{sukhbaatar2014training,xiao2015learning,patrini2017making}. Some others resort to curriculum learning~\cite{bengio2009curriculum} by either designing a step-wise easy-to-hard strategy for training ~\cite{guo2018curriculumnet} or introducing an extra MentorNet~\cite{jiang2017mentornet} for sample weighting. However, these methods independently estimate the importance weights for individuals which ignore the comparisons among different samples while they have been proven to be the key of humans to perceive and learn novel concepts from noisy input images~\cite{schmidt1992new}.
Some other solutions follow semi-supervised configuration where they assume a small manually-verified set can be used~\cite{li2017learning,veit2017learning,lee2017cleannet,Li_2019_CVPR}. However, this assumption may be not supported in real-world applications.
\pxj{With the Vicinal Risk Minimization(VRM) principle, mixup~\cite{zhang2017mixup,verma2018manifold} exploits a vicinal distribution for sampling virtual sample-target vectors, and proves its  robustness  
for synthetic noisy data. But its effectiveness is limited in real-world noisy data~\cite{arazo2019unsupervised}.} 

In this paper, we propose a conceptually simple yet efficient training block, termed as Attentive Feature Mixup (AFM), to suppress mislabeled data thus to make training robust to noisy labels. The AFM is a plug-and-play block for training any networks and is comprised of two crucial parts: 1) a \textit{Group-to-Attend} (GA) module that first randomly groups a minibatch images into small subsets and then estimates sample weights within those subsets by an attention mechanism, and 2) a \textit{mixup} module that interpolates new features and soft labels according to self-attention weights. Particularly, for the GA module, we evaluate three feature interactions to estimate group-wise attention weights, namely concatenation, summary, and element-wise multiplication. The interpolated samples and original samples are respectively fed into an interpolation classifier and a normal classifier. Figure \ref{fig:simple_motivation} illustrates how AFM suppress mislabeled data. Generally, there exists two main types of mixup: intra-class mixup (Figure \ref{fig:simple_motivation} (c)) and inter-class mixup (Figure \ref{fig:simple_motivation} (b)).
For both types, the interpolations between mislabeled samples and clean samples may become useful for training with adaptive attention weights, \ie low weights for the mislabeled samples and high weights for the clean samples.
In other words, our AFM hallucinates numerous useful noisy-reduced samples to guide deep networks learn better representations from noisy labels.  
 Overall, as a noisy-robust training method, our AFM is promising in the following aspects.
\begin{itemize}
\item It does not rely on any assumptions and extra clean subset.
\item With AFM, the ratio of harmful noisy interpolations (\ie between noisy samples) over all interpolations is largely less than the original noisy ratio.
\item It jointly optimizes the mixup interpolation weights and classifier, suppressing the influence of mislabeled data via low attention weights. 
\item It partially inherits the vicinal risk minimization of mixup to alleviate over-fitting while improves it by sampling less feature-target vectors around mislabeled data from the mixup vicinal distribution. 
\end{itemize} 
We validate our AFM on two popular real-world noisy-labeled datasets: Food101N \cite{lee2017cleannet} and Clothing1M \cite{xiao2015learning}. Experiments show that our AFM outperforms recent state-of-the-art methods significantly with accuracies of \textbf{87.23}\% on Food101N and \textbf{82.09}\% on Clothing1M.

\section{Related Work}
\label{related}
\subsection{Learning with Noisy Labeled Data}
Learning with noisy data has been vastly studied on the literature of machine learning and computer vision. 
Methods on learning with label noise can be roughly grouped into three categories: noise-cleaning methods, semi-supervised methods and noise-robust methods.


First, noise-cleansing methods aim to identify and remove or relabel noisy samples with filter approaches~\cite{barandela2000decontamination,miranda2009use}.
Brodley \textit{et al.}~\cite{brodley1999identifying} propose to filter noisy samples using ensemble classifiers with majority and consensus voting. Sukhbaatar \textit{et al.}~\cite{sukhbaatar2014training} introduce an extra noise layer into a standard CNN which adapts the network outputs to match the noisy label distribution. Daiki \textit{et al.}~\cite{tanaka2018joint} propose a joint optimization framework to train deep CNNs with label noise, which updates the network parameters and labels alternatively.
Based on the consistency of the noisy groundtruth and the current prediction of the model, Reed \textit{et al.}~\cite{reed2014training} present a `Soft' and a `Hard' bootstrapping approach to relabel noisy data. Li \textit{et al.}~\cite{li2017learning} relabel noisy data using the noisy groundtruth and the current prediction adjusted by a knowledge graph constructed from DBpedia-Wikipedia.

Second, semi-supervised methods aim to improve performance using a small manually-verified clean set. Lee \textit{et al.}~\cite{lee2017cleannet} train an auxiliary CleanNet to detect label noise and adjust the final sample loss weights. In the training process, the CleanNet needs to access both the original noisy labels and the manually-verified labels of the clean set.
Veit \textit{et al.}~\cite{veit2017learning} use the clean set to train a label cleaning network but with a different architecture. These methods assume there exists such a label mapping from noisy labels to clean labels. Xiao \textit{et al.}~\cite{xiao2015learning} mix the clean set and noisy set, and train an extra CNN and a classification CNN to estimate the posterior distribution of the true label. Li \textit{et al.}~\cite{Li2020ProductIR} first train a teacher model on clean and noisy data, and then distill it into a student model trained on clean data.

Third, the noise-robust learning methods are assumed to be not too sensitive to the presence of label noise, which directly learn models from the noisy labeled data~\cite{joulin2016learning,krause2016unreasonable,misra2016seeing,patrini2017making,Wang_2020_CVPR}.
Manwani \textit{et al.}~\cite{manwani2013noise} present a noise-tolerance algorithm under the assumption that the corrupted probability of an example is a function of the feature vector of the example. 
With synthetic noisy labeled data, Rolnick \textit{et al.}~\cite{rolnick2017deep} demonstrate that deep learning is robust to noise when training data is sufficiently large with large batch size and proper learning rate. Guo \textit{et al.}~\cite{guo2018curriculumnet} develop a curriculum training scheme to learn noisy data from easy to hard. Han \etal~\cite{han2019deep} propose a Self-Learning with Multi-Prototype (SMP) method to learn robust features via alternatively training and clustering which is time-consuming. Wang \textit{et al.}~\cite{Wang_2020_CVPR} propose to suppress uncertain samples with self-attention, ranking loss, and relabeling.
Our method is most related to MetaCleaner \cite{zhang2019metacleaner}, which hallucinates a clean (precisely noise-reduced) representation by mixing samples (the ratio of the noisy images need to be small) from the same category. Our work differs from it in that i) we formulate the insight as attentive mixup, and ii) hallucinate noisy-reduced samples not only within class but also between classes which significantly increases the number of interpolations and expands the decision boundaries. Moreover, we introduce more sample interactions rather than the concatenation in \cite{zhang2019metacleaner}, and find a better one.   

\subsection{Mixup and Variations}
Mixup \cite{zhang2017mixup} regularizes the neural network to favor simple linear behavior in-between training examples. Manifold Mixup \cite{verma2018manifold} leverages semantic interpolations in random layers as additional training signal to optimize neural networks. The interpolation weights of those two methods are drawn from a $\beta$ distribution. Meta-Mixup \cite{mai2019metamixup} introduces a meta-learning based online optimization approach to dynamically learn the interpolation policy from a reference set. AdaMixup \cite{guo2019mixup} also learns the interpolation policy from dataset with an additional network to infer the policy and an intrusion discriminator. Our work differs from these variations in that i) we design a Group-to-Attend mechanism to learn attention weights for interpolating in a group-wise manner which is the key to reduce the influence of noises and ii) we address the noisy-robust problem on real-world noisy data and achieve state-of-the-art performance. 

\section{Attentive Feature Mixup}
\label{method}

 As proven in cognitive studies, we human mainly perceive and learn novel concepts from noisy input images by comparing and summary \cite{schmidt1992new}. Based on this motivation, we propose a simple yet efficient model, called Attentive Feature Mixup (AFM), which aims to learn better features by making clean and noisy samples interact with each other in small groups. 

\subsection{Overview}
Our AFM works on traditional CNN backbones and includes two modules: i) Group-to-Attend (GA) module and ii) mixup module, as shown in Figure \ref{fig:pipeline}.

 Let $\mathcal{B}=\{(\mathbf{I}_1,y_1), (\mathbf{I}_2,{y}_2), \cdots, (\mathbf{I}_n,{y}_n)\}$ be the mini-batch set of a noisy labeled dataset, which contains $n$ samples, and ${y}_i \in \mathcal{R}^C$ is the noisy one-hot label vector of image $\mathbf{I}_i$. The AFM works as the following procedure. First, a CNN backbone $\phi(\cdot;\theta)$ with parameter $\theta$ is used to extract image features $\{{x}_1, {x}_2, \cdots, {x}_n\}$.
Then, the Group-to-Attend (GA) module is used to divide the mini-batch images into small groups and learn attention weights for each samples within each group. Subsequently, with the group-wise attention weights, a mixup module is used to interpolate new samples and soft labels. Finally, these interpolations along with the original image features are fed into an interpolation classifier $f_{c1}$ (\ie FC layer) and a normal classifier $f_{c2}$ (\ie FC layer), respectively. Particularly, the interpolation classifier is supervised by the soft labels from the mixup module and the normal classifier by the original given labels which are noisy. Our AFM partially inherits the vicinal risk minimization of mixup to alleviate over-fitting with massive interpolations. Further, with jointly optimizing the mixup interpolation weights and classifier, AFM  improves mixup by sampling less feature-target vectors around mislabeled data from the mixup vicinal distribution.

\begin{figure}[t]
	\includegraphics[width=0.95\textwidth]{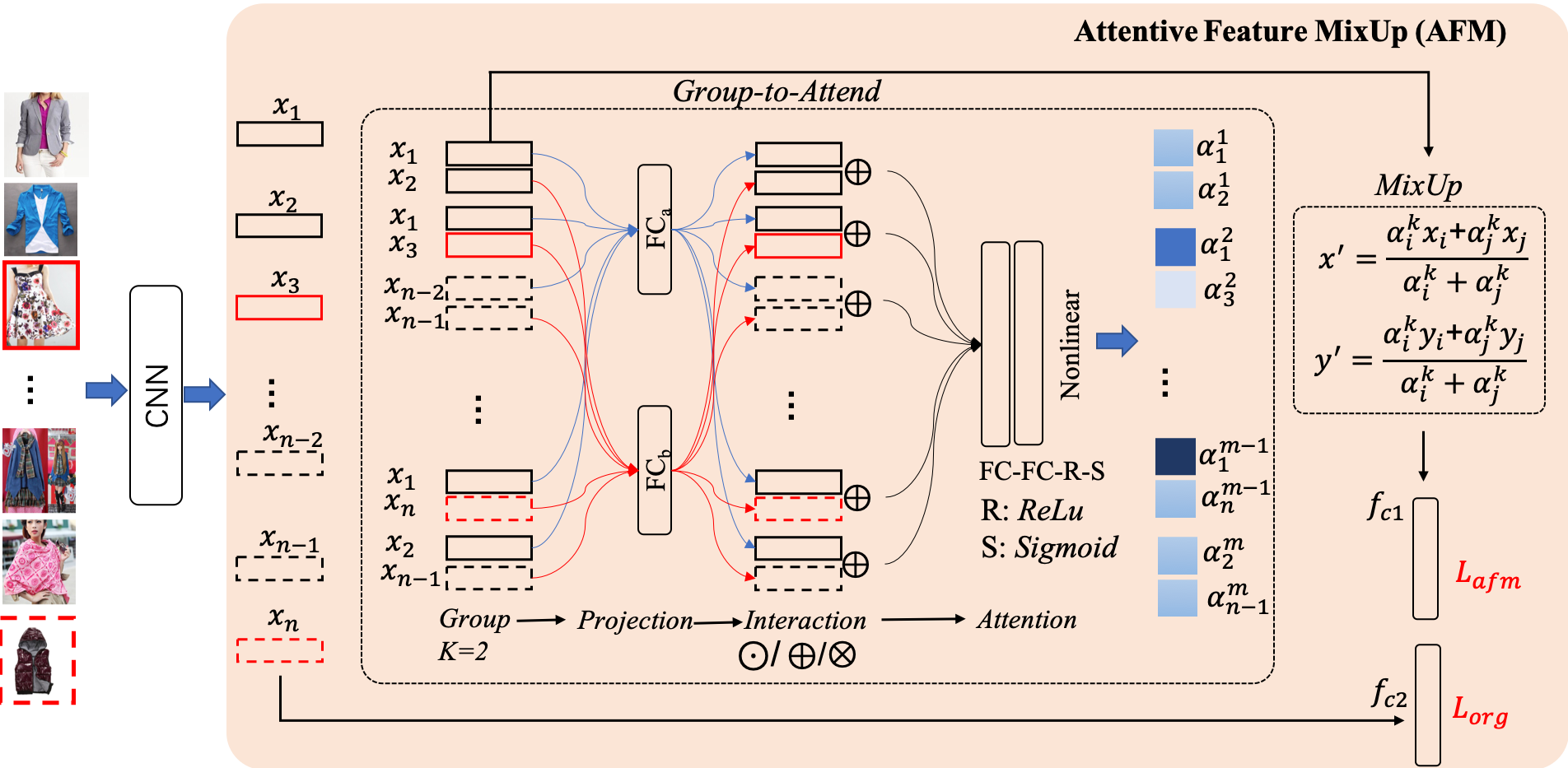}
	\caption{The pipeline of Attentive Feature Mixup (AFM). Given a mini-batch of $n$ images, a backbone CNN is first applied for feature extraction. Then, a Group-to-Attend (GA) module randomly composites massive groups with the group size $K$ and linearly projects each element within a group with a separated FC layer, and then combines each group with an interaction (\ie concatenation ($\odot$), sum ($\oplus$), and element-wise multiplication ($\otimes$) ), and finally outputs $K$ attention weights for each group. With the group-wise attention weights, a mixup module is used to interpolate new samples and soft labels.}
	\label{fig:pipeline}
\end{figure}


\subsection{Group-to-Attend Module}

In order to obtain meaningful attention weights, \ie high weights for clean samples and low weights for mislabeled samples, we elaborately design a Group-to-Attend module, which consists of four crucial steps. First, we randomly and repeatedly selecting $K$ samples to construct groups as many as possible (the number of groups depends on the input batch size and the GPU memory). Second, we use $K$ fully-connected (FC) layers to map the ordered samples of each group into new feature embeddings for sample interactions. As an example of $K=2$ in Figure \ref{fig:pipeline}, $x_i$ and $x_j$ are linearly projected as,
\begin{equation}
\tilde{x}_i = f_a(x_i;w_a), ~~ \tilde{x}_j = f_b(x_j;w_b),
\end{equation}
where $w_a$ and $w_b$ are the parameters of FC layer $f_a$ and $f_b$, respectively. Third, we further make $\tilde{x}_i$ and $\tilde{x}_j$  interact for group-wise weight learning.  Specifically, we experimentally explore three kinds of interactions: concatenation ($\odot$), sum ($\oplus$), and element-wise multiplication ($\otimes$).
Last, we apply a light-weight self-attention network to estimate group-wise attention weights. Formally, for $K=2$ and the sum interaction, this step can be defined as follows,
\begin{eqnarray}
[\alpha_i^k, \alpha_j^k] & = & \psi_{t}(\tilde{x}_i\oplus \tilde{x}_j;\theta_t) \nonumber  \\ 
&=& \psi_{t}(f_a(x_i;w_a)\oplus f_b(x_j;w_b);\theta_t),
\label{eq:att}
\end{eqnarray}
where $\psi_{t}$ is the attention network, $\theta_t$ denotes its parameters, and $k$ denotes the $k-$th group. For the architecture of $\psi_{t}$, we follow the best one of \cite{zhang2019metacleaner}, \ie FC-FC-ReLu-Sigmoid. It is worth noting that feature interaction is crucial for learning meaningful attention weights since the relationship between noisy and clean samples within a group can be learned efficiently while not the case of non-interaction (\ie learning weights for each other separately).
\begin{mypro}
The attention weights are meaningful with sum interaction if and only if $f_a\neq f_b$.
\end{mypro}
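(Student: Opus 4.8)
\emph{Proof plan.} I would begin by pinning down what ``meaningful'' means here, since the statement is only as sharp as that definition. I read it as: the attention module must be able to put a high weight on the clean member of a group and a low weight on the mislabeled one, and — because the two slots of a group are filled by an arbitrary random draw — the weight a sample receives must reflect its (noisy-or-clean) role rather than the slot it happens to occupy. In other words, the slot-to-weight map $h(x_i,x_j):=\psi_t\bigl(f_a(x_i)\oplus f_b(x_j);\theta_t\bigr)$ should be permutation-equivariant: relabelling the slots relabels the weights, $h(x_j,x_i)=\mathrm{swap}\,h(x_i,x_j)$. With this reading the proof splits into two short arguments, both resting on the single fact that $\oplus$ is commutative.

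\emph{Necessity ($\Leftarrow$).} Suppose $f_a=f_b=:f$. Because addition commutes, $f_a(x_i)\oplus f_b(x_j)=f(x_i)+f(x_j)=f(x_j)+f(x_i)$, so the vector handed to $\psi_t$ is \emph{invariant} under swapping the two group members; hence $h(x_j,x_i)=h(x_i,x_j)$. Combining this invariance with the equivariance demanded above forces $h(x_i,x_j)=\mathrm{swap}\,h(x_i,x_j)$, i.e.\ $\alpha_i^k=\alpha_j^k$ in every group. The mixup step then degenerates to the plain mean $\tfrac12(x_i+x_j)$ with soft label $\tfrac12(y_i+y_j)$, in which a mislabeled sample is never suppressed; the weights carry no information about label noise, so they are not meaningful. (If one prefers not to build equivariance into the definition, the same commutativity shows that a pair sampled in its two orders — both of which occur under the repeated random grouping — hands each of the two samples each of the two weights equally often, so each is effectively weighted $\tfrac12$; the conclusion is unchanged.)

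\emph{Sufficiency ($\Rightarrow$).} Suppose $f_a\neq f_b$, and (as holds generically, e.g.\ after random initialization of the two FC layers) that they differ in their linear parts, so $f_a-f_b$ is a non-constant map. Then one can pick $x_i,x_j$ with $(f_a-f_b)(x_i)\neq(f_a-f_b)(x_j)$, whence $f_a(x_i)\oplus f_b(x_j)\neq f_a(x_j)\oplus f_b(x_i)$: the input to $\psi_t$ now depends on the \emph{identities} of the two samples, not merely on the unordered pair. The attention network therefore has the information it needs to output $\alpha_i^k\neq\alpha_j^k$, and for $\psi_t$ of the chosen architecture (FC-FC-ReLU-Sigmoid, sufficiently expressive) there are parameters — and, empirically, those returned by minimizing the noisy training loss — for which the larger weight lands on the clean sample and the smaller on the mislabeled one. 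Hence the weights can be, and are, meaningful.

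\emph{Where the difficulty lies.} The only delicate point is the definitional one: ``meaningful'' must be pinned down tightly enough that necessity is a genuine implication, yet loosely enough that sufficiency does not covertly require controlling what $\psi_t$ learns — tying it to permutation-equivariance (or, equivalently, to order-invariance of the effective weights under the repeated grouping) does exactly this, after which both directions reduce to commutativity of $\oplus$. Everything else is routine: checking that equal attention weights collapse the mixup to an average, and exhibiting a single order-sensitive input in the $f_a\neq f_b$ case. Two sanity checks are worth recording: the analogue for the non-commutative concatenation interaction is vacuous — order is encoded even when $f_a=f_b$ — which is precisely why the proposition singles out $\oplus$; and the argument goes through verbatim for group size $K>2$ once ``$f_a\neq f_b$'' is upgraded to ``the $K$ projections are pairwise distinct''.
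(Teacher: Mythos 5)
Your core argument coincides with the paper's: the proof in the paper consists entirely of observing that when $f_a=f_b=f$ (or the projections are removed), commutativity of $\oplus$ gives $\psi_t(f(x_i)\oplus f(x_j))=\psi_t(f(x_j)\oplus f(x_i))$, so the output pair of weights is the same for both orderings and ``an attention weight can correspond to both samples,'' i.e.\ the weight assignment is decoupled from sample identity. That is exactly your necessity argument, and your parenthetical remark (each sample receives each of the two weights equally often under repeated random grouping, hence is effectively weighted $\tfrac12$) is the closest match to the paper's phrasing that the weights become ``random.'' Where you go beyond the paper: you make the definition of ``meaningful'' explicit (permutation-equivariance plus the ability to separate clean from noisy), and you actually argue the sufficiency direction ($f_a\neq f_b$ makes the input to $\psi_t$ order-sensitive, so the network \emph{can} assign asymmetric, noise-aware weights), which the paper omits entirely despite stating the proposition as an ``if and only if.'' Your version is therefore the more complete treatment; the paper's is a one-directional plausibility argument. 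Two small caveats on your write-up: your direction labels are swapped relative to the arrows (what you call ``Necessity ($\Leftarrow$)'' is the necessity of $f_a\neq f_b$, i.e.\ the $\Rightarrow$ half of the biconditional read as ``meaningful $\Rightarrow f_a\neq f_b$''), and your sufficiency half necessarily remains heuristic --- ``the network has the information it needs'' does not guarantee the trained weights are meaningful --- but no rigorous proof of that half is possible for a statement this informal, and the paper does not attempt one either.
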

\begin{proof}
Assume we remove the projection layers $f_a$ and $f_b$ or share them as the same function $f$, then Eq. (\ref{eq:att}) is rewritten as, 
\begin{eqnarray}
[\alpha_i^k, \alpha_j^k] &=& \psi_{t}(f({x}_i)\oplus f({x}_j);\theta_t) \nonumber\\
&=& \psi_{t}(f({x}_j)\oplus f({x}_i);\theta_t). 
\end{eqnarray}
As can be seen, removing or sharing the projection makes the attention network $\psi_{t}$ confirm the commutative law of addition. This corrupts the attention weights to be random since an attention weight can correspond to both samples for the following MixUp module. 
\end{proof}

\textbf{The effect of GA}.
An appealing benefit of our GA is that it reduces the impact of noisy-labeled samples significantly. Let $N_{noisy}$ and $N_{total}$ represent the number of the noisy images and total images in a noisy dataset, respectively. The noise ratio is $\frac{N_{noisy}}{N_{total}}$ in the image-wise case. Nevertheless, the number of pure noisy groups (\ie all the images are mislabeled in these groups) in the group-wise case becomes $A_{N_{noisy}}^K$. With $K=2$, we have,
\begin{equation}
\frac{N_{noisy}}{N_{total}} > \frac{A_{N_{noisy}}^2}{A_{N_{total}}^2} = \frac{N_{noisy} (N_{noisy} - 1)}{N_{total} (N_{total} - 1)} \approx \frac{N_{noisy}^2}{N_{total}^2}.
\end{equation}
We argue that GA can reduce the pure noisy ratio dramatically and partial noisy groups (\ie some images within these groups are corrected-labeled) may provides useful supervision by the well-trained attention network. However, though the ratio is smaller when $K$ becomes larger, large $K$ may lead to over-smooth features for the new interpolations which are harmful for discriminative feature learning.
%

\subsection{Mixup Module}
The mixup module interpolates virtual feature-target vectors for training. Specifically, following classic mixup vicinal distribution, we  normalize the attention weights into range [0, 1]. Formally, for $K=2$ and group members \{$x_i$, $x_j$\}, the mixup can be written as follows,
\begin{eqnarray}
x' &=& \frac{1}{\sum_\alpha}(\alpha_{i}x_i + \alpha_{j}x_j), \\
y' &=& \frac{1}{\sum_\alpha}(\alpha_{i}y_i + \alpha_{j}y_j),
\end{eqnarray}
where $x'$ and $y'$ are the interpolated feature and soft label.

\subsection{Training and Inference}
\textbf{Training}. Our AFM along with the CNN backbone can be trained in an end-to-end manner. Specifically, we conduct a multi-task training scheme to separate the contributions of original training samples and new interpolations. Let $f_{c1}$ and $f_{c2}$ respectively denote the classifiers (include the Softmax or Sigmoid operations) of interpolations and original samples, we can formulate the training loss in a mini batch as follows,
\begin{eqnarray}
 \mathcal{L}_{total} & = & \lambda \mathcal{L}_{afm} + (1-\lambda) \mathcal{L}_{org}  \\
&=&  \frac{\lambda}{m}\sum_{i=1}^m \mathcal{L}(f_{c1}( x'_i), y'_i) + \frac{(1-\lambda)}{n}\sum_{i=1}^n \mathcal{L}(f_{c2}( x_i), y_i), \nonumber
\end{eqnarray}
where $n$ is the batch size, $m$ is the number of interpolations, and $\lambda$ is a trade-off weight. We use the Cross-Entropy loss function for both ${L}_{afm}$ and ${L}_{org}$. In this way, our AFM can be viewed as a regularizer over the training data by massive interpolations. As proven in \cite{zhang2017mixup,verma2018manifold}, this regularizer can largely improve the generalization of deep networks. In addition, the parameters of $f_{c1}$ and $f_{c2}$ can be shared since both original features and interpolations are in same dimensions. 

\textbf{Inference}. After training, both the GA module and mixup module can be simply removed since we do not need to compose new samples at test stage. We keep the classifiers $f_{c1}$ and $f_{c2}$ for inference. Particularly, they are identical and we can conduct inference as traditional CNNs if the parameters are shared.

\section{Experiments}
\label{experiments}

In this section, we first introduce datasets and implementation details, and then compare our AFM with the state-of-the-art methods. Finally, we conduct ablation studies with qualitative and quantitative results.

\subsection{Datasets and Implementation Details}
In this paper, we conduct experiments on two popular real-world noisy datasets: Food101N \cite{lee2018cleannet} and Clothing1M \cite{xiao2015learning}. 
 \textbf{Food101N} consists of 365k images that are crawled from Google, Bing, Yelp, and TripAdvisor using the Food-101 taxonomy. The annotation accuracy is about 80\%. The clean dataset Food-101 is collected from \textit{foodspotting.com} which contains 101 food categories with 101,000 real-world food images totally. For each class, 750 images are used for training, the other 250 images for testing. In our experiments, following the common setting, we use  all images of Food-101N as the noisy dataset, and report the overall accuracy on the Food-101 test set. 
 \textbf{Clothing1M} contains 1 million images of clothes with 14 categories. Since most of the labels are generated by the surrounding text of the images on the Web, a large amount of annotation noises exist, leading to a low annotation accuracy of 61.54\% \cite{xiao2015learning}. The human-annotated set of Clothing1M is used as the clean set which is officially divided into training, validation and testing sets, containing 50k, 14k and 10k images respectively. We report the overall accuracy on the clean test set of Clothing1M.


\textbf{Implementation Details}
As widely used in existing works, ResNet50 is used as our CNN backbone and initialized by the official ImageNet pre-trained model. For each image, we resize the image with a short edge of 256 and random crop $224\times224$ patch for training. We use SGD optimizer with a momentum of 0.9. The weight decay is 5 $\times$ $10^{-3}$, and the batch size is 128. For Food101N, the initial learning rate is 0.001 and divided by 10 every 10 epochs. We stop training after 30 epochs. For Clothing1M, the initial learning rate is 0.001 and divided by 10 every 5 epochs. We stop training after 15 epochs. All the experiments are implemented by Pytorch with 4 NVIDIA V100 GPUs. The default $\lambda$ and $K$ are 0.75 and 2, respectively. By default, the classifiers $f_{c1}$ and $f_{c2}$ are shared.

\begin{table}[t]
\caption{Comparison with the state-of-the-art methods on Food101N dataset. VF(55k) is the noise-verification set used in CleanNet \cite{lee2018cleannet}. }
\begin{center}
\begin{tabular}{@{}lcccccccc@{}}
\toprule
 Method & Training Data & Training time & Acc \\ \midrule
Softmax \cite{lee2018cleannet} & Food101 &-- & 81.67 \\
Softmax \cite{lee2018cleannet} & Food101N &-- &81.44 \\
Weakly Supervised \cite{zhuang2017attend} & Food101N &--& 83.43 \\
CleanNet($w_{hard}$) \cite{lee2018cleannet} & Food101N + VF(55K) &-- &83.47 \\
CleanNet($w_{soft}$) \cite{lee2018cleannet} & Food101N + VF(55K) &--& 83.95 \\
MetaCleaner \cite{zhang2019metacleaner} & Food101N &-- &85.05 \\
SMP \cite{han2019deep} & Food101N & --&85.11\\
ResNet50 (baseline) & Food101N & 4h16min40s & 84.51 \\ \midrule
\textbf{AFM (Ours)} & Food101N & 4h17min4s& \textbf{87.23} \\
\bottomrule
\end{tabular}
\end{center}
\label{tab:Food101N}
\end{table}
 
\subsection{Comparison on Food101N }
We compare AFM to the baseline model and existing state-of-the-art methods in Table \ref{tab:Food101N}. AFM improves our strong baseline from 84.51\% to 87.23\%, and consistently outperforms recent state-of-the-art methods with large margins. Moreover, our AFM is almost free since it only increases training time by 24s. Specifically, AFM outperforms \cite{zhuang2017attend} by 3.80\%, CleanNet($w_{soft}$) by 3.28\%, and SMP \cite{han2019deep} by 2.12\%. We notice that, CleanNet($w_{hard}$) and CleanNet($w_{soft}$) use extra 55k manually-verified images, while we do not use any extra images. In particular, MetaCleaner \cite{zhang2019metacleaner} uses a similar scheme but limited in intra-class mixup and its single feature interaction type, which leads to 2.18\% worse than our AFM. An ablation study will further discuss these issues in the following section. 

\begin{table}[tp]
\caption{Comparison with the state-of-the-art methods on Clothing1M. VF(25k) is the noise-verification set used in CleanNet \cite{lee2018cleannet}. }
\begin{center}
\begin{tabular}{@{}lccccccc@{}}
\toprule
Method & Training Data  & Acc. (\%)\\ \midrule
Softmax \cite{lee2018cleannet} & 1M noisy & 68.94 \\
Weakly Supervised \cite{zhuang2017attend} & 1M noisy & 71.36 \\
JointOptim \cite{zhang2019metacleaner} & 1M noisy & 72.23 \\
MetaCleaner \cite{zhang2019metacleaner} & 1M noisy & 72.50 \\
SMP (Final)\cite{han2019deep} & 1M noisy & \textbf{74.45} \\
SMP (Initial) \cite{han2019deep} & 1M noisy & 72.09 \\
\textbf{AFM (Ours)} & 1M noisy & \textbf{74.12} \\
\midrule
CleanNet($w_{hard}$) \cite{lee2018cleannet} & 1M noisy + VF(25K) & 74.15 \\
CleanNet($w_{soft}$) \cite{lee2018cleannet} & 1M noisy + VF(25K) & 74.69 \\
MetaCleaner \cite{zhang2019metacleaner} & 1M noisy + VF(25K) & 76.00 \\
SMP \cite{han2019deep} & 1M noisy + VF(25K) & 76.44\\
\textbf{AFM (Ours)} & 1M noisy + VF(25K) & \textbf{77.21} \\
\midrule
CleanNet($w_{soft}$) \cite{lee2018cleannet} & 1M noisy + Clean(50K) & 79.90 \\
MetaCleaner \cite{zhang2019metacleaner} & 1M noisy + Clean(50K) & 80.78 \\
SMP \cite{han2019deep} & 1M noisy + Clean(50K) & 81.16\\
CurriculumNet \cite{guo2018curriculumnet} & 1M noisy + Clean(50K) & 81.50 \\
\textbf{AFM (Ours)}  & 1M noisy + Clean(50K) & \textbf{82.09} \\
\bottomrule
\end{tabular}
\end{center}
\label{tab:Clothing1M}
\end{table}
\subsection{Comparison on Clothing1M}
For the comparison on Clothing1M, we evaluate our AFM in three different settings following \cite{lee2018cleannet,patrini2017making,zhang2019metacleaner,han2019deep}: (1) only the noisy set are used for training, (2) the 25K extra manually-verified images~\cite{lee2018cleannet} are added into the noisy set for training, and (3) the 50K clean training images are added into the noisy set.

The comparison results are shown in Table \ref{tab:Clothing1M}. For the first setting, our AFM improves the baseline method from 68.94\% to 74.12\%, and consistently outperforms MetaCleaner, JointOptim, and SMP (Initial) by about 2\%. Although 
SMP (Final) performs on par with AFM in this setting, it needs several training-and-correction loops and careful parameter tuning. Compared to SMP (Final), our AFM is simpler and almost free in computational cost.

For the second setting, other methods except for MetaCleaner mainly apply the 25K verified images to train an accessorial network~\cite{lee2018cleannet,patrini2017making} or to select the class prototypes~\cite{han2019deep}. Following \cite{zhang2019metacleaner}, we train our AFM on 1M noisy training set, and then fine-tune it on the 25K verified images. As shown in Table \ref{tab:Clothing1M}, AFM obtains 77.21\% which sets new record in this setting. Specifically, our AFM is better than MetaCleaner and SMP by 1.21\% and 0.77\%, respectively.

For the third setting, all the methods first train models on the noisy set and then fine-tune them on the clean set. CurriculumNet~\cite{guo2018curriculumnet} uses a deeper CNN backbone and obtains accuracy 81.5\%, which is slightly better than SMP and other methods. Our AFM  outperforms CurriculumNet by 0.59\%, and is better than MetaCleaner by 1.31\%. It is worth emphasizing that both CurriculumNet and SMP need to train repeatedly after model convergence which are complicated and time-consuming, while AFM is much simpler and almost free.


\begin{table}[tp]
\caption{Results of different feature interactions in Group-to-Attend module. $^*$It removes $FC_a$ and $FC_b$ in GA module.}
\begin{center}
\begin{tabular}{@{}cccccccc@{}}
\toprule
\# & Interaction type & Training Data  & Acc. (\%)\\ \midrule
1 & Concatenation & Food101N & 86.95 \\
2 & Concatenation$^{*}$ & Food101N & 86.51 \\
3 & Sum & Food101N & \textbf{87.23} \\
4 & Sum$^{*}$ & Food101N & {86.12} \\
5 & Multiplication & Food101N & 86.64 \\
\bottomrule
\end{tabular}
\end{center}
\label{tab:interactions}
\end{table}

\begin{table}[t]
\centering
\makebox[0pt][c]{\parbox{1\textwidth}{%
    \begin{minipage}[b]{0.5\hsize}\centering
 \caption{Evaluation of trade-off $\lambda$.}
\resizebox{\linewidth}{!}{
\begin{tabular}{@{}cccccccc@{}}
\toprule
$\lambda$ & 0.00  & 0.25 & 0.50 & 0.75 & 1.00\\ \midrule
Acc. (\%) & 84.51 & 86.75 & 86.97 & \textbf{87.23} & 86.47  \\
\bottomrule
\end{tabular}}
\label{tab:fcratio}
    \end{minipage}
    \hfill
    \begin{minipage}[b]{0.45\hsize}\centering
        	\caption{Evaluation of group size.}
	\resizebox{\linewidth}{!}{
	\begin{tabular}{@{}ccccccccccc@{}}
		\toprule
		Size & 2  & 3 & 4 & 5 & 6  \\ \midrule
		Acc. (\%) & \textbf{87.23}  & 86.46 & 86.01 & 85.92 & 85.46 \\ \midrule
	\end{tabular}}
	\label{tab:numberinteraction}
    \end{minipage}
}}
\end{table}

\subsection{Ablation Study}
\textbf{Evaluation of feature interaction types}. 
\textit{Concatenation}, \textit{sum} and \textit{element-wise multiplication} are three popular feature fusion or interaction methods. MetaCleaner~\cite{zhang2019metacleaner} simply takes the \textit{concatenation}, and ignores the impact of the interaction types.
We conduct an ablation study for them along with the projection in Group-to-Attend module. Specifically, the group size is set to 2 for this study. Table \ref{tab:interactions} presents the results on Food101N. Two observations can be concluded as following. First, with $FC_a$ and $FC_b$, the \textit{sum} interaction consistently performs better than the others. Second, for both \textit{concatenation} and \textit{sum}, it is better to use the projection process. As mentioned in Section 3.2, removing $FC_a$ and $FC_b$ leads to random attention weights for \textit{sum} interaction, which may degrade our AFM to standard Manifold mixup~\cite{zhang2017mixup}. Nevertheless, it still improves the baseline (\ie 84.51\%) slightly.


\textbf{Evaluation of the trade-off weight $\lambda$}. 
In training phase, $\lambda$ is used to trade-off the loss ratio between $\mathcal{L}_{afm}$ and $\mathcal{L}_{org}$. We evaluate it by increasing $\lambda$ from 0 to 1 on Food101N, and present the results in Tabel \ref{tab:fcratio}. 
We achieve the best accuracy with default $\lambda$ (\ie 0.75). Decreasing $\lambda$ means to use less interpolations from AFM, which gradually degrades the final performance. Particularly, $\lambda=0$ is our baseline that only uses original noisy training data. In the other extreme case, using only the interpolations from AFM is better than the baseline but slight worse than the default one. This may be explained by that the massive interpolations are more or less smoothed by our AFM since the interpolation weights cannot be zeros due to the GA module. Hence, adding original features can be better since these features fill this gap naturally.
%

\textbf{Evaluation of the group size.}
Our previous experiments fix the group size as 2 which construct pairwise samples for generating virtual feature-target vectors. Here we explore different group sizes for our attentive feature mixup, Specifically, we increase the group size from 2 to 6, and present the results in Table \ref{tab:numberinteraction}.
As can be seen, enlarging the group size gradually degrades the final performance. This may be explained by that large group size interpolates over-smoothed features which are not discriminative for any classes. 

\textbf{Intra-class mixup vs. Inter-class mixup}. 
To investigate the contributions of intra-class mixup and inter-class mixup, we conduct an evaluation by exploring different ratios between intra- and inter-class interpolations with group size 2. Specifically, we constrain the number of interpolations for both mixup types in each minibatch with 8 varied ratios from 10:0 to 2:8 on the Food101N dataset. The results are shown in Figure \ref{fig:ratio}. 
Several observations can be concluded as following. First, removing the inter-class mixup (\ie 10:0) degrades the performance (it is similar with MetaCleaner~\cite{zhang2019metacleaner}) while adding a small ratio (\eg 8:2) of inter-class mixup significantly improves the final result. This indicates that the inter-class mixup is more useful for better feature learning. Second, increasing the ratio of inter-class mixup further  boosts performance but the performance gaps are small. Third, we get the best result by random selecting group-wise samples. We argue that putting constraints on the ratio of mixup types may result in different data distribution compared to the original dataset while random choice avoids this problem.

\textbf{AFM for learning from small dataset}. 
Since AFM can generate numerous of noisy-reduced interpolations in training stage, we intuitively check the power of AFM on small datasets. To this end, we construct sub-datasets from Food101N by randomly decreasing the size of Food101N to 80\%, 60\%, 40\%, and 20\%. The results of our default AFM on these synthetic datasets are shown in Figure \ref{fig:augmentation}. Several observations can be concluded as following. First, our AFM consistently improves the baseline significantly. Second, the improvements from data size 40\% to 100\% are larger than that of 20\%. This may be because that small dataset leads to less diverse interpolations. Third, we interestingly find that our AFM already obtains the state-of-the-art performance with only 60\% data on Food101N.

%

\begin{figure}[t]
\begin{minipage}[b]{0.45\linewidth}
\centering
\includegraphics[width=1\linewidth]{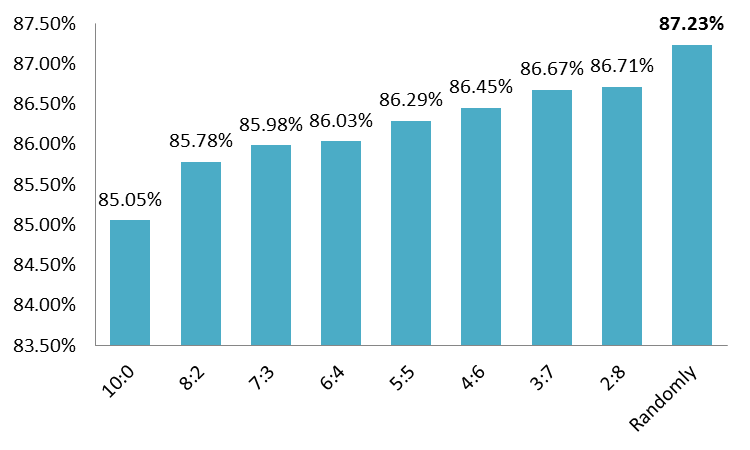}
	\caption{Evaluation of the ratios of Intra- and Inter-class mixup.}
	\label{fig:ratio}
\end{minipage}
\hfill
\begin{minipage}[b]{0.45\linewidth}
\centering
\includegraphics[width=1\linewidth]{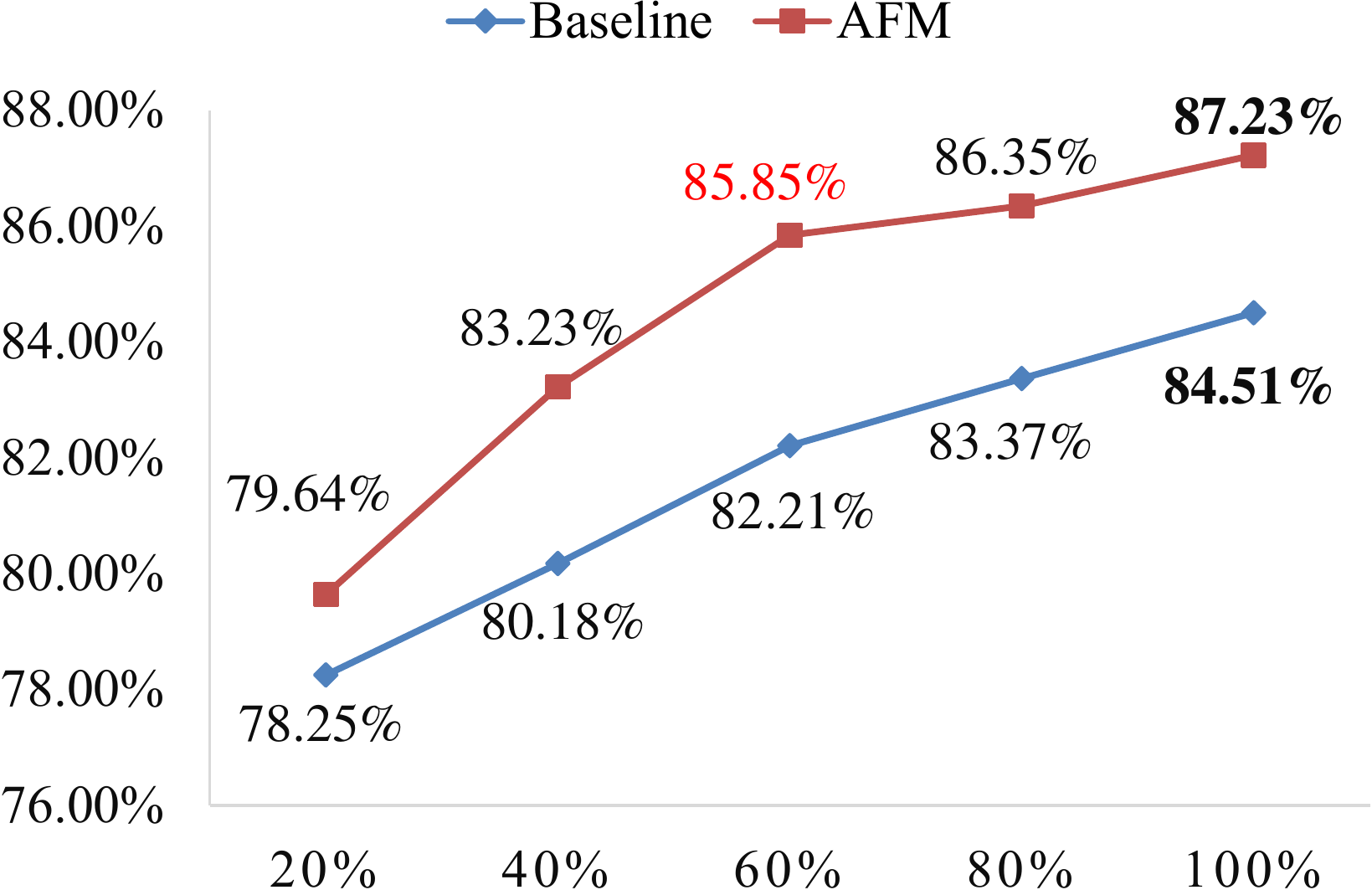}
	\caption{Evaluation of AFM on synthetic small datasets.}
	\label{fig:augmentation}
\end{minipage}
\end{figure}
%

\begin{table}[t]
\begin{minipage}{0.45\linewidth}
\caption{Comparison of our AFM with mixup \cite{zhang2017mixup} and Manifold mixup \cite{verma2018manifold}. We also evaluate $f_{c1}$ and $f_{c2}$ for them.}
\resizebox{\linewidth}{!}{
\begin{tabular}{@{}llccccc@{}}
\toprule
 Method & $f_{c1}$ + $f_{c2}$ &$f_{c1}$ + $f_{c2}$ (Shared) \\ \midrule
  mixup \cite{zhang2017mixup} & 85.36\%  & 85.63\% \\
 Manifold mixup \cite{verma2018manifold} &  85.85\% & 86.12\% \\
 AFM (Ours) & \textbf{86.97}\% & \textbf{87.23\%}\\
\bottomrule
\end{tabular}}
\label{tab:ablations}
\end{minipage}
\hfill
	\begin{minipage}{0.5\linewidth}
		\centering
		\includegraphics[width=0.8\linewidth]{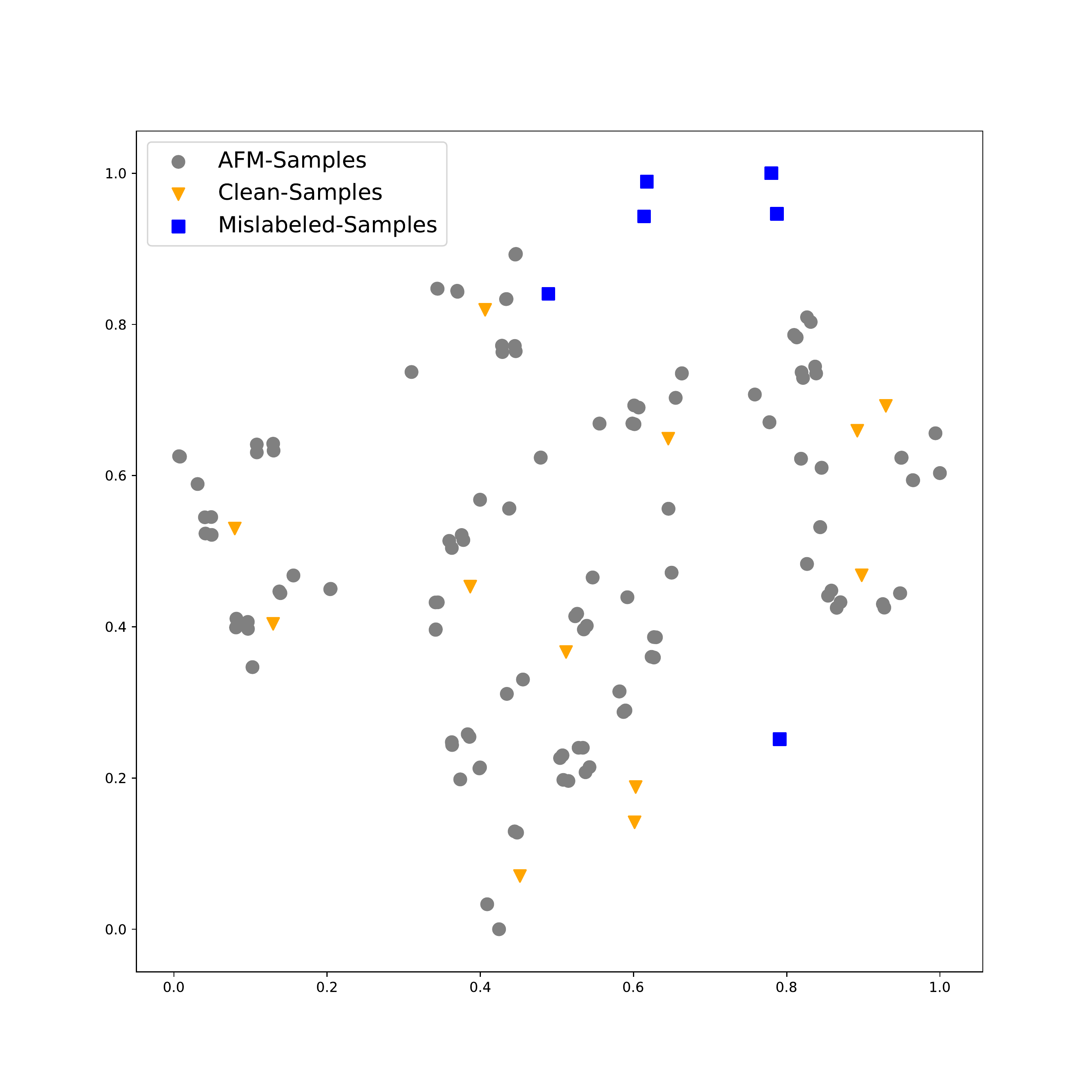}
		\captionof{figure}{AFM sample distribution.}
		\label{fig:sampledis}
	\end{minipage}
\end{table}

\begin{figure}[t]
	\includegraphics[width=0.92\textwidth]{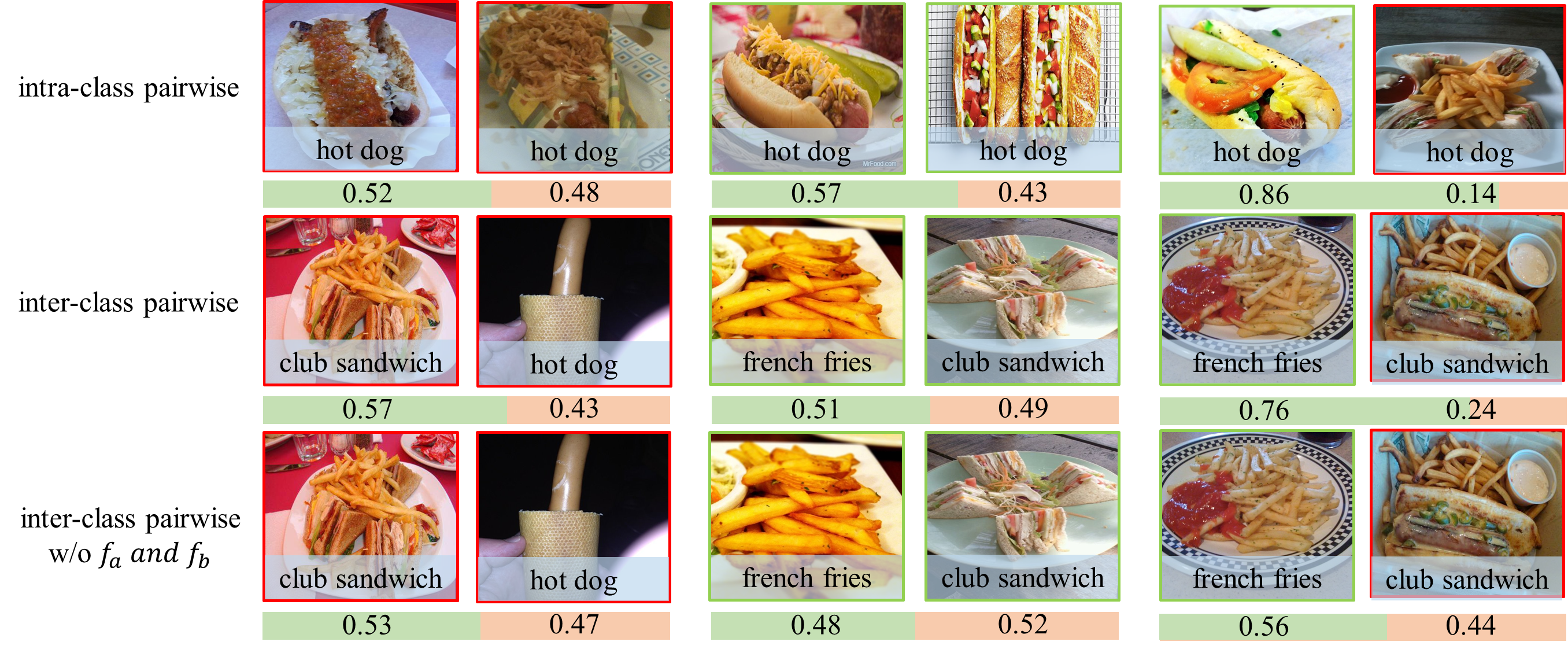}
	\caption{Visualization of the attention weights in our AFM. The green and red boxes represent the clean and noisy samples. }
	\label{fig:visualization}
\end{figure}

\textbf{AFM vs. classic mixup}. 
Since our AFM is related to the mixup scheme, we compare it to the Standard mixup \cite{zhang2017mixup} and Manifold mixup \cite{verma2018manifold}. The Standard mixup \cite{zhang2017mixup} interpolates samples in image level while Manifold mixup \cite{verma2018manifold} in feature level. Both of them drawl the interpolation weights randomly from a $\beta$ distribution. Our method introduce a Group-to-Attend (GA) module to generate meaningful weights for noise-robust training. As the new interpolations and the original samples can contribute differently, we separately apply classifiers for them, \ie $f_{c1}$ for interpolations and $f_{c2}$ for original samples. Table \ref{tab:ablations} presents the comparison. Several observations are concluded as following. First, for both classifier setting, our AFM outperforms the others largely, \eg AFM is better than standard mixup by 1.6\% and the Manifold mixup by 1.11\% in the shared classifier setting. Second, the shared classifiers are slightly better than the independent classifiers for all methods, which may be explained by that sharing parameters makes the classifier favor linear behavior over all samples thus reducing over-fitting and encouraging the model to discover useful features.

\subsection{Visualizations}
To better investigate the effectiveness of our AFM, we make two visualizations: i) attentive mixup sample distribution between clean and noisy samples in Figure \ref{fig:sampledis} and ii) the normalized attention weights in Figure \ref{fig:visualization}. For the former, we randomly select several noisy samples and clean samples on the VK(25) set of Food101N and apply our trained AFM model to generate virtual samples (\ie AFM samples), and then use t-SNE to visualize all the real samples and attentive mixup samples. Figure \ref{fig:sampledis} evidently shows that our AFM samples are mainly distributed around the clean samples, demonstrating our AFM suppresses noisy samples effectively. \textit{It is worth noting that classical mixup samples are doomed to distribute around all the real samples rather than only clean samples}.

For the latter visualization, the first row of Figure \ref{fig:visualization} shows three types of pairs for the intra-class case, the second row for the inter-class case, and the third row for the inter-class case without projection in the Group-to-Attend module. The first column denotes the ``noisy+noisy'' interpolations, the second column denotes ``clean+clean'', and the third column denotes ``clean+noisy''. Several finds can be observed as following.
First, for both intra- and inter-class cases, the weights of ``noisy+noisy'' and ``clean+clean'' interpolations trend to be equal since these interpolations may lie in the decision boundaries which make the network hard to identify which is better for training.
Second, for the ``clean+noisy'' interpolations on the first two rows, our AFM assigns evidently low weights to these noisy samples which demonstrates the effectiveness of AFM.
Last, without projection in the Group-to-Attend module, our default AFM loses the ability to identify noisy samples as shown in the last image pair.

\section{Conclusion}
This paper proposed a conceptually simple yet efficient training block, termed as Attentive Feature Mixup (AFM), to address the problem of learning with noisy labeled data. Specifically, AFM is a plug-and-play training block, which mainly leverages grouping and self-attention to suppress mislabeled data and does not rely on any assumptions and extra clean subset. We conducted extensive experiments on two challenging real-world noisy datasets: Food101N and Clothing1M. Quantitative and qualitive results demonstrated that our AFM is superior to recent state-of-the-art methods. In addition, the grouping and self-attention is expected to extend in other topics, \eg semi-supervised learning, where one may conduct this module for real annotations and pseudo labels to automatically suppress incorrect pseudo labels.

\textbf{Acknowledge}.
This work is partially supported by National Key Research and Development Program of China (No. 2020YFC2004800), National Natural Science Foundation of China (U1813218, U1713208), Science and Technology Service Network Initiative of Chinese Academy of Sciences (KFJ-STS-QYZX-092), Guangdong Special Support Program (2016TX03X276), and  Shenzhen Basic Research Program (JSGG20180507182100698, CXB201104220032A), Shenzhen Institute of Artificial Intelligence and Robotics for Society.


\bibliographystyle{splncs04}
\bibliography{egbib}
\end{document}